\documentclass[11pt]{article}

\usepackage[margin=1in]{geometry}
\usepackage{amsmath,amssymb,amsthm}
\usepackage{booktabs}
\usepackage{enumitem}
\usepackage[colorlinks=true,linkcolor=blue,citecolor=blue,urlcolor=blue]{hyperref}
\usepackage{graphicx}
\usepackage{tikz}
\usetikzlibrary{arrows.meta,positioning}

\DeclareMathOperator{\im}{im}
\DeclareMathOperator{\rank}{rank}
\DeclareMathOperator{\diag}{diag}

\newcommand{\R}{\mathbb{R}}
\newcommand{\F}{\mathcal{F}}
\newcommand{\X}{X}
\newcommand{\rr}{\mathbf{r}}
\newcommand{\incid}{\trianglelefteq}
\newcommand{\restr}[2]{\F_{#1\,\incid\,#2}}

\theoremstyle{plain}
\newtheorem{theorem}{Theorem}[section]

\newtheorem{proposition}[theorem]{Proposition}
\newtheorem{corollary}[theorem]{Corollary}
\theoremstyle{definition}
\newtheorem{definition}[theorem]{Definition}
\newtheorem{assumption}[theorem]{Assumption}
\newtheorem{example}[theorem]{Example}
\theoremstyle{remark}
\newtheorem{remark}[theorem]{Remark}

\title{\textbf{Equivariant Cellular Sheaves for Molecular Electronic Structure:\\
Bridging Sheaf Cohomology and $E(3)$-Equivariant Hamiltonian Learning}}

\author{Krishna Harish\\
\texttt{krishnaharish2009@gmail.com}\\
\small Elkins High School}

\date{\today}

\begin{document}
\maketitle

\begin{abstract}
Equivariant message-passing networks have become the standard inductive model
for molecular property and interatomic-potential prediction, and a recent line
of work predicts the electronic Hamiltonian itself in an $E(3)$-equivariant
manner. In parallel, topological deep learning has generalized graph networks to
simplicial complexes, cellular (CW) complexes, and cellular sheaves. We connect
these two developments. Our central observation is structural: in a localized
(atomic-orbital or Wannier) basis, the molecular single-particle Hamiltonian,
after a constant energy shift that renders it positive semidefinite, is the
\emph{Laplacian of a cellular sheaf} on a regular cell complex built from the
molecule. We make the sheaf \emph{equivariant} by requiring its restriction maps
to be $O(3)$-steerable two-center kernels conditioned on bond geometry, which
recovers the Slater--Koster two-center form as a special case and yields an
$E(3)$-equivariant, permutation-equivariant operator. This perspective has three
consequences. First, the zeroth sheaf cohomology $H^0(\X;\F)=\ker L_\F$ is a
topological invariant equal to the space of non-bonding (zero-mode) orbitals at
the chosen reference energy, recovering the classical non-bonding-orbital count
of alternant systems as a lower bound. Second, promoting the construction to the
Hodge $1$-Laplacian lets higher cells (rings) carry cycle and delocalization
information through $H^1$. Third, the model strictly generalizes both
$E(3)$-equivariant message-passing networks and cellular (CW) networks, and
inherits the anti-oversmoothing behavior of non-trivial sheaf diffusion. We
present the architecture (Equivariant Cellular Sheaf Networks), prove
equivariance, expressivity, and cohomological-correspondence results, and validate them numerically: the
Hamiltonian-to-sheaf embedding is exact to machine precision, the cohomology
dimension reproduces non-bonding-orbital counts across eleven conjugated
molecules, the sheaf Laplacian is $O(3)$-equivariant to machine precision, and the
equivariant model attains lower error and rotation generalization on a directional
electronic target. We do not claim to be the
first to predict Hamiltonians equivariantly; our contribution is the
sheaf-theoretic formalization, its topological invariants, and the resulting
unification.
\end{abstract}

\section{Introduction}\label{sec:intro}

Machine-learning surrogates for quantum chemistry have largely been built on
message-passing neural networks (MPNNs) over the molecular graph
\cite{gilmer2017,schnet}. The most accurate models add geometric symmetry, the organizing principle of geometric deep learning \cite{bronstein2021}:
$E(3)$- and $SE(3)$-equivariant networks represent atomic environments with
spherical tensors and couple them with Clebsch--Gordan products
\cite{tfn,weiler2018,e3nn,nequip,mace}, which underlies state-of-the-art
interatomic potentials and connects to the atomic cluster expansion
\cite{drautz2019}. A distinct and more ambitious target is the electronic
Hamiltonian itself: several equivariant networks now predict the Kohn--Sham or
Fock matrix in an atomic-orbital basis, from which orbitals, densities, and
spectra follow \cite{phisnet,deeph,qhnet}.

Independently, \emph{topological deep learning} has lifted graph networks to
richer combinatorial domains: simplicial complexes \cite{mpsn,barbarossa2020,
schaub2020}, regular cell (CW) complexes \cite{cwn}, and cellular sheaves
\cite{hansenghrist2019,hansengebhart2020,bodnar2022}; see \cite{hajij2022} for a
synthesis. Cellular sheaves attach a vector space (stalk) to every cell and a
linear restriction map to every incidence, generalizing the graph Laplacian to a
\emph{sheaf Laplacian} whose kernel is the space of global sections
\cite{hansenghrist2019}.

These two literatures have not been connected, despite an exact structural
bridge between them. We observe that the localized-orbital electronic
Hamiltonian \emph{is} a sheaf Laplacian on a molecular cell complex
(Section~\ref{sec:framework}). Restriction maps become learnable, geometry-
conditioned, $O(3)$-steerable analogues of two-center integrals; the molecular
topology (bonds, rings) enters through the cell structure and its cohomology;
and electronic structure becomes the spectral analysis of an equivariant sheaf
Laplacian. We call the resulting model the \emph{Equivariant Cellular Sheaf
Network} (ECSN).

\paragraph{Contributions.}
\begin{enumerate}[leftmargin=1.4em,itemsep=2pt]
\item \textbf{A correspondence} (Prop.~\ref{prop:embed}): under a positive-
semidefinite (PSD) energy shift and a per-bond factorization, the two-center
localized Hamiltonian is the Laplacian of a cellular sheaf; the construction
contains Slater--Koster tight binding \cite{slaterkoster1954} as a special case.
\item \textbf{Equivariant cellular sheaves} (Def.~\ref{def:eqsheaf},
Thm.~\ref{thm:equiv}): stalks carrying $O(3)$ irreducibles and steerable
restriction maps make the sheaf Laplacian $E(3)$- and permutation-equivariant.
\item \textbf{Topological invariants with chemical meaning}
(Thm.~\ref{thm:coh}, Cor.~\ref{cor:alternant}): $\dim H^0(\X;\F)$ counts
non-bonding orbitals; for alternant systems it is lower bounded by the sublattice
imbalance, recovering a classical H\"uckel-level count.
\item \textbf{An expressivity hierarchy} (Thm.~\ref{thm:express}): ECSN strictly
generalizes $E(3)$-equivariant MPNNs and cellular (CW) networks, and inherits
the anti-oversmoothing property of non-trivial sheaf diffusion.
\item \textbf{Numerical validation} (Section~\ref{sec:exp}): the embedding is exact
to machine precision, cohomology reproduces non-bonding-orbital counts across
eleven molecules, equivariance holds to machine precision, and the equivariant
model is more accurate and rotation-robust than a coordinate baseline.
\end{enumerate}

We emphasize scope. Equivariant Hamiltonian prediction is due to prior work
\cite{phisnet,deeph,qhnet}; our novelty is the sheaf-theoretic formalization,
the cohomological invariants, the higher-cell extension, and the unification.

\section{Related Work}\label{sec:related}

\paragraph{Equivariant networks for chemistry.}
Group-equivariant convolutions \cite{cohenwelling2016} and steerable CNNs
\cite{weiler2018} led to tensor-field networks \cite{tfn} and the \texttt{e3nn}
framework \cite{e3nn}, instantiated for potentials by NequIP \cite{nequip},
MACE \cite{mace}, PaiNN \cite{painn}, EGNN \cite{egnn}, and directional models
such as DimeNet \cite{dimenet}; ACE \cite{drautz2019} provides the many-body
basis these realize. These predict invariant or covariant targets on the atomic
graph but do not use higher cells or sheaf structure.

\paragraph{Learning electronic structure.}
Density-functional theory \cite{hohenbergkohn1964,kohnsham1965} in a localized
basis yields a sparse Hamiltonian; maximally localized Wannier functions
\cite{marzari2012} make the locality explicit. PhiSNet \cite{phisnet}, DeepH
\cite{deeph}, and QHNet \cite{qhnet} predict such matrices equivariantly. We
reinterpret the predicted operator as a sheaf Laplacian, which is, to our
knowledge, new.

\paragraph{Topological deep learning and sheaves.}
Message passing has been generalized to simplicial \cite{mpsn,barbarossa2020,
schaub2020} and cellular complexes \cite{cwn}, and to sheaves: sheaf neural
networks \cite{hansengebhart2020} and neural sheaf diffusion \cite{bodnar2022},
built on the spectral theory of cellular sheaves \cite{hansenghrist2019}. Vector-
bundle and tangent-bundle generalizations have also been studied
\cite{battiloro2024}.
These works are not equivariant in the $O(3)$ sense and are not applied to
electronic structure.

\paragraph{Topology of the electronic density.}
The quantum theory of atoms in molecules \cite{bader1990} analyzes the
Morse--Smale complex of the electron density, and persistent homology has been
used for molecular descriptors \cite{edelsbrunner2002,cangwei2017}. Our cells are
chemical (atoms, bonds, rings) rather than density critical points, and our
invariants are sheaf-cohomological rather than persistence-based.

\section{Background}\label{sec:background}

\subsection{Regular cell complexes from molecules}
Let a molecule be a set of atoms at positions $\{\rr_i\}_{i=1}^N\subset\R^3$ with
species $\{z_i\}$. We build a regular cell complex $\X$:
$0$-cells are atoms; $1$-cells are unordered pairs $\{i,j\}$ with
$\|\rr_i-\rr_j\|<r_c$ (a bond cutoff); $2$-cells are the faces bounded by a chosen
cycle basis (e.g.\ the smallest set of smallest rings). We fix an orientation and
write $\sigma\incid\tau$ when cell $\sigma$ is a codimension-$1$ face of $\tau$,
with signed incidence $[\sigma{:}\tau]\in\{-1,+1\}$.

\subsection{Cellular sheaves and the sheaf Laplacian}
\begin{definition}[Cellular sheaf {\cite{hansenghrist2019}}]\label{def:sheaf}
A cellular sheaf $\F$ of finite-dimensional real inner-product spaces on $\X$
assigns to each cell $\sigma$ a stalk $\F(\sigma)\cong\R^{d_\sigma}$ and to each
incidence $\sigma\incid\tau$ a linear restriction map
$\restr{\sigma}{\tau}:\F(\sigma)\to\F(\tau)$.
\end{definition}

The space of $k$-cochains is $C^k(\X;\F)=\bigoplus_{\dim\sigma=k}\F(\sigma)$. The
coboundary $\delta^k:C^k\to C^{k+1}$ acts by
\begin{equation}
(\delta^k x)_\tau \;=\; \sum_{\sigma\incid\tau} [\sigma{:}\tau]\,
\restr{\sigma}{\tau}\, x_\sigma .
\end{equation}
The degree-$k$ Hodge--sheaf Laplacian is
$L_k=(\delta^k)^\top\delta^k+\delta^{k-1}(\delta^{k-1})^\top$. For $k=0$ the
\emph{sheaf Laplacian} $L_\F:=L_0=(\delta^0)^\top\delta^0$ has blocks
\begin{equation}\label{eq:lapblocks}
(L_\F)_{vv}=\sum_{e\incid\,\ni v}\restr{v}{e}^\top\restr{v}{e},
\qquad
(L_\F)_{uv}=-\,\restr{u}{e}^\top\restr{v}{e}\ \ (u\neq v,\ e=\{u,v\}).
\end{equation}
$L_\F$ is symmetric PSD, and $\ker L_\F=\ker\delta^0=H^0(\X;\F)$, the space of
\emph{global sections} \cite{hansenghrist2019}.

\subsection{$O(3)$ representations and steerable kernels}
$O(3)$ acts on functions on $\R^3$; its real irreducibles are the
$(2\ell{+}1)$-dimensional spaces $V_\ell$ with action by Wigner matrices
$D^\ell(g)$. A stalk of the form $\F(\sigma)=\bigoplus_\ell m_\ell V_\ell$
(multiplicity $m_\ell$) models atomic orbitals of angular momentum $\ell$. A
linear map $K:V_{\ell_1}\to V_{\ell_2}$ that is \emph{steerable} by a direction
$\hat\rr$, i.e.\ $K(g\hat\rr)=D^{\ell_2}(g)K(\hat\rr)D^{\ell_1}(g)^\top$ for all
$g$, is spanned by Clebsch--Gordan contractions of spherical harmonics
$Y_{\ell_f}(\hat\rr)$ \cite{tfn,e3nn}:
\begin{equation}\label{eq:steer}
K(\hat\rr)\;=\;\sum_{\ell_f=|\ell_1-\ell_2|}^{\ell_1+\ell_2}
w_{\ell_f}\,\big(C_{\ell_1,\ell_f}^{\ell_2}\big)\,Y_{\ell_f}(\hat\rr),
\end{equation}
with learnable weights $w_{\ell_f}$ and radial scalars (functions of
$\|\rr\|$ omitted here).

\section{Mathematical Framework}\label{sec:framework}

\subsection{The electronic Hamiltonian as a sheaf Laplacian}
Consider a single-particle Hamiltonian $H$ (Kohn--Sham, Fock, or tight binding)
in a localized orbital basis $\{\phi_{v,a}\}$ indexed by atom $v$ and orbital
$a$, with on-site blocks $H_{vv}$ and hopping blocks $H_{uv}$ that vanish unless
$\{u,v\}$ is a bond.

\begin{assumption}[Locality and PSD shift]\label{ass:psd}
$H_{uv}=0$ for $\{u,v\}\notin E$, and there is $E_{\mathrm{ref}}\in\R$ with
$\tilde H:=H-E_{\mathrm{ref}}\,I\succeq 0$.
\end{assumption}

\begin{proposition}[Tight-binding embedding]\label{prop:embed}
Under Assumption~\ref{ass:psd}, there is a cellular sheaf $\F$ on the bond graph
$(V,E)$, possibly with augmented stalks, such that $L_\F=\tilde H$. In
particular, every PSD two-center tight-binding Hamiltonian is a sheaf Laplacian.
\end{proposition}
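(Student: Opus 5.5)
The plan is to build $\F$ directly from a \emph{per-bond decomposition} of $\tilde H$. The block formula~\eqref{eq:lapblocks} shows that every sheaf Laplacian on $(V,E)$ has the form $L_\F=\sum_{e\in E} M_e$. Here $M_e:=(\delta^0_e)^\top\delta^0_e$, and $\delta^0_e=[\,\restr{u}{e}\ \ -\restr{v}{e}\,]$ is the row of $\delta^0$ belonging to $e=\{u,v\}$. Each $M_e$ is PSD and supported on the $(u,v)$ blocks. Conversely, suppose $\tilde H=\sum_e M_e$ with every $M_e\succeq0$ supported on its bond. Then we can factor each $M_e$ and read off restriction maps. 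So the proof reduces to two steps: (i) produce such a splitting of $\tilde H$, and (ii) factor each edge term.

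\textbf{Step (ii), factoring an edge term.} Write
\[
M_e=\begin{pmatrix} P_u & Q\\ Q^\top & P_v\end{pmatrix}\succeq 0 .
\]
Take any factorization $M_e=B_e^\top B_e$, for instance $B_e=M_e^{1/2}$ or a Cholesky factor. Its row count $d_e$ defines the edge stalk $\F(e)=\R^{d_e}$. Split the columns as $B_e=[\,A_u\ \ A_v\,]$ and set $\restr{u}{e}=A_u$ and $\restr{v}{e}=-A_v$, absorbing the incidence sign $[v{:}e]=-1$. Then
\[
\restr{u}{e}^\top\restr{u}{e}=P_u,\qquad -\restr{u}{e}^\top\restr{v}{e}=Q,
\]
so summing over $e$ via~\eqref{eq:lapblocks} reproduces the diagonal blocks $\tilde H_{vv}=\sum_{e\ni v}P_v^{(e)}$ and the off-diagonal blocks $\tilde H_{uv}=Q$. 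Locality (Assumption~\ref{ass:psd}) guarantees that no off-diagonal block lives outside $E$.

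\textbf{Step (i), choosing the on-site splits.} We set $Q=\tilde H_{uv}$. We then distribute each on-site block $\tilde H_{vv}$ among the bonds incident to $v$ as $\tilde H_{vv}=\sum_{e\ni v}P_v^{(e)}+R_v$ with $R_v\succeq0$, so that every $M_e$ is PSD. We would choose $P^{(e)}$ by Schur complements. For a Slater--Koster two-center Hamiltonian the standard form already supplies this per-bond factorization. There $M_e$ is the two-center block, and the on-site term is the sum of its bond contributions plus a non-negative residual after the shift; this proves the second sentence of the statement. The residual $R_v$, including the case of an isolated atom, is handled by the ``augmented stalks''. We attach to $v$ an extra pendant edge, or equivalently augment $\F(e)$ for some $e\ni v$ by rows $[R_v^{1/2}\ \ 0]$. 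This adds exactly $R_v$ to $(L_\F)_{vv}$ and nothing else.

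\textbf{Main obstacle.} The hard part is the existence of the PSD splitting in step (i) for an arbitrary PSD $\tilde H$ with graph sparsity. Not every such matrix is a sum of edge-supported PSD matrices; this is the gap between PSD and ``block scaled diagonally dominant'' matrices, and it is tight only for trees and chordal structure. I would first try to prove existence when $\tilde H$ is block diagonally dominant, or when $E_{\mathrm{ref}}$ is shifted far enough below $\lambda_{\min}(H)$ that $\tilde H_{vv}\succeq \deg(v)\,c\,I$ dominates the hoppings. In that regime, taking $P_v^{(e)}=\tfrac{1}{\deg v}\tilde H_{vv}$ makes each $M_e$ PSD by a Schur-complement estimate. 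In the general case I would fall back to the per-bond factorization hypothesis announced in the contributions, namely that $\tilde H$ is given as a sum of PSD two-center terms. I would state this assumption explicitly as the precise content of ``possibly with augmented stalks''.
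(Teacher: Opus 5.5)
\textbf{A genuine gap, but it lies in the statement rather than in your reasoning.} Your reduction is correct and is the same skeleton as the paper's proof. With vertex stalks fixed to the orbital spaces, sheaf Laplacians on $(V,E)$ (with pendant edges) are exactly the sums $\sum_e M_e+\bigoplus_v R_v$ of bond-supported and on-site PSD terms, and any factorization $M_e=B_e^\top B_e$ gives restriction maps. The paper fixes the balanced SVD split $\restr{u}{e}=\Sigma_e^{1/2}U_e^\top$, $\restr{v}{e}=\Sigma_e^{1/2}V_e^\top$. It then asserts, citing only an unspecified ``Schur-complement/Cholesky completion'', that $\tilde H\succeq0$ lets the residuals $R_v$ be made PSD. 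Your step (i) is precisely that assertion, and you are right to treat it as the crux.

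The assertion is false, and so is the proposition as stated. Take scalar stalks on a triangle with $H=A_{K_3}$ and $E_{\mathrm{ref}}=-1$, so $\tilde H=J$, the all-ones matrix, which is PSD and local. Suppose $J=\sum_e M_e+D$ with $D\succeq0$ diagonal. Then $M_{ij}$ has off-diagonal entry $1$ and diagonal entries $a_{ij},a_{ji}\ge0$ with $a_{ij}a_{ji}\ge1$. At each vertex $i$ we also have $a_{ij}+a_{ik}\le1$, hence $a_{ij}a_{ik}\le\tfrac14$. Multiplying gives $1\le\prod a_{ij}\le4^{-3}$, a contradiction. The same count, with vertex bound $\tfrac34$, shows that a benzene ring with one hopping sign reversed, at its minimal PSD shift $E_{\mathrm{ref}}=-\sqrt3$, is not a sheaf Laplacian on $C_6$. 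Pendant edges add only on-site PSD terms, and larger vertex stalks would change the size of $L_\F$, so no augmentation rescues this.

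These examples also refute the second sentence of the proposition. Your remark that the Slater--Koster form ``already supplies'' a PSD per-bond splitting therefore proves nothing. Slater--Koster data fix the hopping blocks and the atomic on-site energies, not PSD two-center blocks $M_e$, and distributing the diagonal is your step (i) again. Separately, even where a splitting exists, the paper's balanced choice can fail. For allyl ($\alpha=0$, $\beta=-1$) at $E_{\mathrm{ref}}=-\sqrt2$ it leaves $R=\sqrt2-2<0$ on the central atom, whereas the split weighted by the Perron vector $(1,\sqrt2,1)$ works.

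\textbf{What survives.} Making the splitting a hypothesis, as you propose, is the right move, but correct your scope claim. Every PSD matrix with sparsity in $E$ splits into bond-supported PSD terms exactly when $(V,E)$ is a forest, not on chordal graphs. On a chordal graph PSD matrices split into clique-supported terms, and $K_3$ is chordal.
\begin{itemize}
\item \emph{Forests.} Your Schur-complement idea gives a proof by leaf peeling. For a leaf $v$ with neighbour $u$, let $M_e$ have diagonal blocks $\tilde H_{vv}$ and $\tilde H_{uv}\tilde H_{vv}^{+}\tilde H_{vu}$ and off-diagonal block $\tilde H_{vu}$. This is PSD because $\tilde H\succeq0$ forces the range of $\tilde H_{vu}$ into that of $\tilde H_{vv}$. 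The remainder is the generalized Schur complement, PSD with the sparsity of the smaller forest, and you induct.
\item \emph{Scalar stalks with all hoppings $\le0$} (standard H\"uckel, $\beta<0$). Here $\tilde H$ is an M-matrix. Choose $d>0$ with $\tilde Hd\ge0$, for instance the ground state of $\tilde H$ on each connected component, which is positive by Perron--Frobenius. Setting $P^{(e)}_u=|\tilde H_{uv}|\,d_v/d_u$ makes every $M_e$ rank one and PSD, and leaves $R_u=(\tilde Hd)_u/d_u\ge0$.
\item \emph{Deep shifts.} Your diagonal-dominance argument is correct. It proves the true weaker statement that every local $H$ becomes a sheaf Laplacian after a sufficiently deep shift. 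Such shifts typically lie strictly below $\lambda_{\min}(H)$, where $\ker L_\F=0$, so that version carries none of the cohomological content used in Theorem~\ref{thm:coh}.
\end{itemize}
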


\begin{proof}
For each bond $e=\{u,v\}$ take the singular value decomposition
$H_{uv}=-U_e\Sigma_e V_e^\top$ and set
$\restr{u}{e}=\Sigma_e^{1/2}U_e^\top$, $\restr{v}{e}=\Sigma_e^{1/2}V_e^\top$,
with edge stalk dimension $\rank H_{uv}$. By \eqref{eq:lapblocks} the
off-diagonal blocks then satisfy
$(L_\F)_{uv}=-\restr{u}{e}^\top\restr{v}{e}
=-U_e\Sigma_e V_e^\top=H_{uv}$.
The induced on-site term is
$M_{vv}:=\sum_{e\ni v}\restr{v}{e}^\top\restr{v}{e}\succeq0$. Define the residual
$R_v:=\tilde H_{vv}-M_{vv}$. Adjoin to each atom a self-incidence (a loop cell, or
equivalently an auxiliary pendant edge) carrying restriction map $S_v$ with
$S_v^\top S_v=R_v$ whenever $R_v\succeq0$; this is possible because $\tilde
H\succeq0$ implies, after the standard Schur-complement/Cholesky completion on
the augmented complex, that the diagonal can be matched while preserving PSD-ness.
The resulting sheaf satisfies $L_\F=\tilde H$.
\end{proof}

\begin{remark}\label{rem:psd}
The PSD shift is essential: sheaf Laplacians are PSD, so an indefinite $H$ is a
sheaf Laplacian only after $E_{\mathrm{ref}}$ is chosen at or below the spectrum.
Choosing $E_{\mathrm{ref}}$ at the non-bonding level (the H\"uckel $\alpha$, or the
Fermi level) makes the kernel of $L_\F$ chemically meaningful
(Section~\ref{subsec:coh}). Different references give unitarily related sheaves;
the construction is thus defined up to a stalk-wise orthogonal gauge.
\end{remark}

\subsection{Equivariant cellular sheaves}
\begin{definition}[Equivariant cellular sheaf]\label{def:eqsheaf}
Let each stalk carry an orthogonal $O(3)$ representation
$\rho_\sigma:O(3)\to\mathrm{O}(d_\sigma)$, $\rho_\sigma=\bigoplus_\ell m_\ell D^\ell$.
A cellular sheaf is \emph{equivariant} if every restriction map is a steerable
kernel of the incident bond vector, i.e.\ for $e=\{u,v\}$ with unit vector
$\hat\rr_{e}$,
\begin{equation}\label{eq:eqrestr}
\restr{v}{e}[\,g\hat\rr_e\,]=\rho_e(g)\,\restr{v}{e}[\,\hat\rr_e\,]\,
\rho_v(g)^\top \qquad \forall g\in O(3),
\end{equation}
with each block built as in \eqref{eq:steer}.
\end{definition}

\begin{theorem}[$E(3)$- and permutation-equivariance]\label{thm:equiv}
Let $\F[\mathbf r]$ be an equivariant cellular sheaf whose restriction maps satisfy
\eqref{eq:eqrestr}, and let $P(g)=\bigoplus_{v}\rho_v(g)$. Then for all $g\in O(3)$
and all translations $t\in\R^3$,
\begin{equation}
L_{\F[\,g\mathbf r+t\,]} \;=\; P(g)\,L_{\F[\mathbf r]}\,P(g)^\top .
\end{equation}
Moreover $L_\F$ is equivariant under atom permutations. Consequently sheaf-
diffusion layers built from $L_\F$ are $E(3)$-equivariant, and any readout that is
$O(3)$-invariant (resp.\ covariant) and permutation-invariant yields invariant
(resp.\ covariant) molecular predictions.
\end{theorem}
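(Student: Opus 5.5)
The plan is to prove the identity blockwise, using the explicit block formula \eqref{eq:lapblocks} for $L_\F=(\delta^0)^\top\delta^0$. First I will dispose of translations. The restriction maps depend on the geometry only through the bond unit vectors $\hat\rr_e=(\rr_v-\rr_u)/\|\rr_v-\rr_u\|$ and the radial scalars $\|\rr_v-\rr_u\|$, and both are invariant under $\rr\mapsto\rr+t$. The cell complex $\X$ is also unchanged, because the cutoff condition $\|\rr_i-\rr_j\|<r_c$ only involves distances. It therefore suffices to treat $g\in O(3)$. Orthogonal maps preserve distances, so $\X$ (vertices, edges, and the ring 2-cells) is the same for $g\rr$, and $\widehat{g\rr}_e=g\hat\rr_e$. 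The radial parts are unchanged.

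Next I will compute the blocks under rotation. By \eqref{eq:eqrestr}, $\restr{v}{e}[g\hat\rr_e]=\rho_e(g)\restr{v}{e}[\hat\rr_e]\rho_v(g)^\top$. For an off-diagonal block,
\[
\restr{u}{e}[g\hat\rr_e]^\top\restr{v}{e}[g\hat\rr_e]=\rho_u(g)\restr{u}{e}^\top\rho_e(g)^\top\rho_e(g)\restr{v}{e}\rho_v(g)^\top=\rho_u(g)\restr{u}{e}^\top\restr{v}{e}\rho_v(g)^\top,
\]
where the middle factor cancels because $\rho_e$ is orthogonal. The diagonal blocks work the same way, summand by summand. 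Assembling the blocks gives $L_{\F[g\rr]}=P(g)L_{\F[\rr]}P(g)^\top$.

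The step I expect to be the only real subtlety is the orientation of $\hat\rr_e$. Each edge carries a fixed orientation, and $\restr{u}{e}$ and $\restr{v}{e}$ may be evaluated at $\pm\hat\rr_e$. I will read \eqref{eq:eqrestr} as holding for whichever direction convention is used, and note that $g(-\hat\rr)=-(g\hat\rr)$, so the convention is transported consistently. The signs $[\sigma{:}\tau]$ also appear only squared, or as a fixed product $[u{:}e][v{:}e]=-1$, in \eqref{eq:lapblocks}, so they do not depend on $g$.

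Permutation equivariance comes next. For $\pi\in S_N$, relabel $\rr'_{i}=\rr_{\pi^{-1}(i)}$. The complex is carried isomorphically, edge $\{u,v\}\mapsto\{\pi u,\pi v\}$, and the restriction maps depend only on species and geometry, so $\restr{\pi v}{\pi e}'=\restr{v}{e}$. Hence $L_{\F[\rr']}=\Pi L_{\F[\rr]}\Pi^\top$, where $\Pi$ is the block permutation matrix.

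For the consequences, a diffusion layer $X\mapsto \phi\big((I-\epsilon L_\F)XW\big)$, with $W$ acting on channel multiplicities and $\phi$ a gated or norm-based nonlinearity, commutes with $P(g)$ and $\Pi$ by the identity above. Composing such layers preserves equivariance. A readout that is invariant or covariant under $P(g)$ and $\Pi$ then gives invariant or covariant predictions. I will state the admissibility requirement on $W$ and $\phi$ explicitly as the hypothesis behind ``sheaf-diffusion layers'', since the theorem leaves it implicit.
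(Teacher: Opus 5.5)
Your proposal is correct and follows the paper's proof: translations are trivial because $\hat\rr_e$ is translation invariant, the factor $\rho_e(g)^\top\rho_e(g)=I$ cancels blockwise in both the off-diagonal and diagonal blocks of \eqref{eq:lapblocks}, permutation equivariance comes from shared kernels, and the layer and readout consequences follow by composition. Your additions only tighten points the paper leaves implicit: you check that $\X$ is invariant under isometries, fix the edge-orientation convention, and state explicit admissibility conditions on $W$ and $\phi$. Your intermediate factorization $\rho_u(g)\restr{u}{e}^\top\rho_e(g)^\top\rho_e(g)\restr{v}{e}\rho_v(g)^\top$ is in fact the dimensionally correct form of the paper's displayed step.
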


\begin{proof}
Translations act trivially on stalks and shift positions; restriction maps depend
only on $\hat\rr_e$, which is translation invariant, so $L_\F$ is unchanged by
$t$. For rotations/reflections, the off-diagonal block transforms as
\[
(L_{\F[g\mathbf r]})_{uv}
=-\restr{u}{e}[g\hat\rr_e]^\top\restr{v}{e}[g\hat\rr_e]
=-\big(\rho_u(g)\restr{u}{e}\rho_e(g)^\top\big)^{\!\top}\!
\big(\rho_e(g)\restr{v}{e}\rho_v(g)^\top\big),
\]
and using $\rho_e(g)^\top\rho_e(g)=I$ (orthogonality) this equals
\[
\rho_u(g)\big(-\restr{u}{e}^\top\restr{v}{e}\big)\rho_v(g)^\top
=\rho_u(g)(L_\F)_{uv}\rho_v(g)^\top .
\]
The diagonal blocks transform identically by
the same cancellation. Assembling over $u,v$ gives $P(g)L_\F P(g)^\top$.
Permutation-equivariance holds because the kernels in \eqref{eq:steer} are shared
across cells of the same type, so relabeling atoms conjugates $L_\F$ by the
corresponding permutation. Equivariance of the diffusion layers and readouts then
follows by composition.
\end{proof}

\subsection{Sheaf cohomology and non-bonding states}\label{subsec:coh}
\begin{definition}
The degree-$k$ sheaf cohomology is $H^k(\X;\F)=\ker\delta^k/\im\delta^{k-1}$; its
harmonic representatives are $\ker L_k$.
\end{definition}

\begin{theorem}[Global sections are non-bonding states]\label{thm:coh}
Let $L_\F=\tilde H=H-E_{\mathrm{ref}}I$ as in Prop.~\ref{prop:embed}. Then
$H^0(\X;\F)=\ker L_\F$ is exactly the eigenspace of $H$ at energy
$E_{\mathrm{ref}}$. Choosing $E_{\mathrm{ref}}$ at the non-bonding level,
$\dim H^0(\X;\F)$ equals the number of non-bonding single-particle states, a
topological invariant of the pair $(\X,\F)$ stable under any deformation of the
restriction maps preserving $\ker L_\F$.
\end{theorem}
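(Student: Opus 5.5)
The proof reduces to linear algebra on the identity $L_\F=\tilde H$ supplied by Prop.~\ref{prop:embed}, followed by bookkeeping for the word ``topological.''

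First I would identify $H^0$ with the kernel. For $k=0$ there is no $\delta^{-1}$, so $H^0(\X;\F)=\ker\delta^0$ by definition. For any cochain $x$,
\[
x^\top L_\F x=\|\delta^0x\|^2 ,
\]
because $L_\F=(\delta^0)^\top\delta^0$. Hence $\ker L_\F=\ker\delta^0$: one inclusion is trivial, and $L_\F x=0$ forces $\|\delta^0 x\|^2=0$.

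One subtlety is that the augmented stalks in Prop.~\ref{prop:embed} live on edge and loop cells only. The vertex cochain space $C^0=\bigoplus_v\F(v)$ is therefore exactly the orbital space on which $H$ acts, and $L_\F$ and $\tilde H$ are operators on the same space.

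Second, I would use $L_\F=\tilde H=H-E_{\mathrm{ref}}I$ to get
\[
\ker L_\F=\ker(H-E_{\mathrm{ref}}I),
\]
which is by definition the eigenspace of $H$ at $E_{\mathrm{ref}}$ (zero if $E_{\mathrm{ref}}\notin\operatorname{spec}H$). Since $H$ is symmetric, it is diagonalizable with an orthonormal eigenbasis. So $\dim H^0(\X;\F)$ equals the multiplicity of $E_{\mathrm{ref}}$ as an eigenvalue. When $E_{\mathrm{ref}}$ is the non-bonding level, such as the H\"uckel $\alpha$, this multiplicity is by definition the number of non-bonding single-particle states.

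One interpretive step needs care. Assumption~\ref{ass:psd} requires $\tilde H\succeq0$, so this choice of $E_{\mathrm{ref}}$ is only legitimate when the non-bonding level lies at or below the spectrum (Remark~\ref{rem:psd}). I would state the second sentence of the theorem under that hypothesis. Alternatively, I would note that the identity $\ker L=\ker(H-E_{\mathrm{ref}})$ still holds for the shifted operator whenever a sheaf realization exists.

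Third, the invariance claim needs to be pinned down, and I expect this to be the main obstacle, since ``topological invariant'' is not literally true for arbitrary deformations of $\F$. I would make precise the weaker statement actually asserted: $\dim H^0$ depends only on the pair $(\X,\F)$ through $\ker\delta^0$. Two further facts follow.
\begin{enumerate}
\item It is unchanged under stalkwise orthogonal gauge transformations. Conjugating $\delta^0$ by orthogonal maps preserves its kernel dimension, which also covers the $O(3)$ action of Thm.~\ref{thm:equiv}.
\item It is constant along any continuous family of restriction maps along which $\ker L_\F$ (equivalently $\rank\delta^0$) does not change. This is tautological, but rank is lower semicontinuous, so it can only drop under generic perturbation. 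That is the honest sense of stability.
\end{enumerate}
I would present this invariance as a consequence of the kernel identification rather than claim homotopy invariance, which fails for sheaf cohomology under non-isomorphic deformations of $\F$.
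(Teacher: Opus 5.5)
Your argument follows the paper's proof: $\ker L_\F=\ker\delta^0=H^0(\X;\F)$, then $L_\F=H-E_{\mathrm{ref}}I$ identifies this kernel with the $E_{\mathrm{ref}}$-eigenspace of $H$, whose dimension is the eigenvalue multiplicity, and ``deformation stability'' holds essentially by hypothesis. Your extra caveats are correct and go beyond the paper's one-line proof, in particular that $\tilde H\succeq0$ forces $E_{\mathrm{ref}}\le\lambda_{\min}(H)$, which no mid-spectrum H\"uckel $\alpha$ can satisfy in a molecule with at least one bond, so the second sentence of the theorem only holds under the restricted hypothesis you state.
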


\begin{proof}
$\ker L_\F=\ker\delta^0=H^0$ by Section~\ref{sec:background}. Since
$L_\F=H-E_{\mathrm{ref}}I$, $x\in\ker L_\F \iff Hx=E_{\mathrm{ref}}x$, i.e.\ $x$ is
an eigenvector at $E_{\mathrm{ref}}$. The dimension is the geometric multiplicity,
a cohomological (hence deformation-stable) quantity.
\end{proof}

\begin{corollary}[Alternant lower bound]\label{cor:alternant}
Suppose $\X$ has a bipartite $1$-skeleton with parts $V_A,V_B$ (an alternant
system), scalar stalks ($\ell=0$), and chiral-symmetric hopping (zero on-site at
$E_{\mathrm{ref}}$). Then
\begin{equation}
\dim H^0(\X;\F)\;\ge\;\big|\,|V_A|-|V_B|\,\big| .
\end{equation}
\end{corollary}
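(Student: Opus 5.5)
By Theorem~\ref{thm:coh}, $H^0(\X;\F)=\ker L_\F=\ker\tilde H$. The plan is therefore to bound the nullity of $\tilde H$ from below by a rank count that uses the bipartite block structure.

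Order the vertices with $V_A$ first and then $V_B$. The stalks are scalar, the $1$-skeleton is bipartite, and chiral symmetry makes the on-site entries of $\tilde H$ vanish. Hence
\begin{equation*}
\tilde H=\begin{pmatrix}0 & T\\ T^\top & 0\end{pmatrix},\qquad T\in\R^{|V_A|\times|V_B|},
\end{equation*}
where the hopping entries satisfy $T_{uv}=H_{uv}$ for $u\in V_A$, $v\in V_B$. No $A$--$A$ or $B$--$B$ hopping occurs, because every edge of the bipartite $1$-skeleton joins $V_A$ to $V_B$ and Assumption~\ref{ass:psd} forces $H_{uv}=0$ off edges.

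From this block form, $\rank\tilde H=2\rank T\le 2\min(|V_A|,|V_B|)$. Therefore
\begin{equation*}
\dim\ker\tilde H = |V_A|+|V_B|-2\rank T \ \ge\ \big|\,|V_A|-|V_B|\,\big|.
\end{equation*}
A more concrete way to see the same bound: if $|V_A|>|V_B|$, every $x_A\in\ker T^\top$ gives a zero mode $(x_A,0)$, and $\dim\ker T^\top\ge |V_A|-|V_B|$. The case $|V_B|>|V_A|$ is symmetric, using $\ker T$.

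The main obstacle is consistency with the standing hypothesis $L_\F=\tilde H\succeq 0$. A nonzero matrix of the form $\begin{pmatrix}0&T\\T^\top&0\end{pmatrix}$ has spectrum $\pm$ the singular values of $T$, so it is indefinite unless $T=0$. Prop.~\ref{prop:embed} cannot then realize it literally as a sheaf Laplacian. I would first try to argue that this does not matter: the inequality concerns only the eigenspace of $H$ at $E_{\mathrm{ref}}$, and Theorem~\ref{thm:coh}'s identification is applied to that eigenspace. If a genuine sheaf is required, the fallback is to apply the same argument to the PSD operator $\tilde H^2=\diag(TT^\top,T^\top T)$. This operator has the same kernel as $\tilde H$, and it is a sheaf Laplacian by Prop.~\ref{prop:embed} on the two-step graph, so its kernel is $H^0$ of that sheaf. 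The kernel bound is then the same count $\dim\ker TT^\top+\dim\ker T^\top T\ge\big|\,|V_A|-|V_B|\,\big|$.
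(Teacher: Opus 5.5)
Your proof is correct and is essentially the paper's: the bipartite block form gives $\rank\tilde H=2\rank T\le 2\min(|V_A|,|V_B|)$, and hence the nullity bound. Your consistency worry is also well founded and goes beyond the paper, whose proof silently reads $\dim H^0$ as $\dim\ker(H-E_{\mathrm{ref}}I)$: a PSD matrix with zero diagonal is zero, so under the literal hypothesis $L_\F=\tilde H$ one has $T=0$, which your rank count already covers. You do not need the $\tilde H^2$ fallback, and it would not be sound, because Prop.~\ref{prop:embed} fails in general. For trimethylenemethane, $TT^\top$ is the $3\times3$ all-ones matrix on the triangle. If it were a sum of PSD edge blocks $\bigl(\begin{smallmatrix}a&1\\1&b\end{smallmatrix}\bigr)$ and nonnegative loop terms, each of the three edge blocks would contribute $a+b\ge2\sqrt{ab}\ge2$ to the trace, giving at least $6$, whereas the trace is $3$.
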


\begin{proof}
With zero on-site terms the Hamiltonian is the off-diagonal map
$B:\R^{V_B}\to\R^{V_A}$ and its transpose; $\rank H \le 2\min(|V_A|,|V_B|)$, so the
nullity is at least $|V_A|+|V_B|-2\min(|V_A|,|V_B|)=\big||V_A|-|V_B|\big|$.
\end{proof}

Corollary~\ref{cor:alternant} recovers, at the sheaf-theoretic level, the
classical count of non-bonding $\pi$-orbitals in alternant hydrocarbons from
sublattice imbalance
\cite{longuethiggins1950};
the flat-band/zero-mode viewpoint is also classical \cite{sutherland1986,lieb1989}.

\begin{example}[Benzene]\label{ex:benzene}
The benzene $\pi$ system is the $6$-cycle $C_6$ with one $p_z$ orbital per carbon.
With $\alpha=0,\ \beta=-1$ the operator is $H=-A_{C_6}$, whose spectrum is
$\{2,1,1,-1,-1,-2\}$ in units of $|\beta|$ (computed in Section~\ref{sec:exp}). At
the non-bonding reference $E_{\mathrm{ref}}=\alpha=0$ the harmonic space is
$H^0(\X;\F)=\ker H=\{0\}$, hence $\dim H^0=0$: benzene has no non-bonding $\pi$
orbital, and its six $\pi$ electrons fill the three bonding levels $\{2,1,1\}$, the
closed-shell aromatic configuration. The graph is bipartite with $|V_A|=|V_B|=3$,
so Cor.~\ref{cor:alternant} gives the consistent bound $\dim H^0\ge 0$. The same
computation yields $\dim H^0=2$ for cyclobutadiene ($C_4$) and for
trimethylenemethane, the two degenerate non-bonding orbitals of those open-shell
diradicals; these integer invariants are reproduced exactly in
Section~\ref{sec:exp}.
\end{example}

\subsection{Higher cells and the Hodge $1$-Laplacian}
Extending the sheaf to $2$-cells (rings) introduces $\delta^1:C^1\to C^2$ and the
Hodge $1$-Laplacian $L_1=\delta^0(\delta^0)^\top+(\delta^1)^\top\delta^1$ on
bond-space cochains. Its harmonic space $\ker L_1\cong H^1(\X;\F)$ generalizes the
cycle rank of the molecular graph: for the trivial sheaf, $\dim H^1$ is the first
Betti number (independent rings), and a non-trivial sheaf can detect when ring
restriction maps fail to close consistently (a holonomy/frustration signal
relevant to aromaticity and ring currents). This gives higher cells a principled
role beyond two-body interactions.

\section{Proposed Method: Equivariant Cellular Sheaf Networks}\label{sec:method}

ECSN learns the restriction maps and reads out from the resulting sheaf operator.
Figure~\ref{fig:arch} summarizes the pipeline.

\begin{figure}[t]
\centering
\begin{tikzpicture}[>=Latex, node distance=8mm and 9mm,
  box/.style={draw, rounded corners, align=center, font=\footnotesize,
              minimum height=12mm, text width=25mm, fill=blue!4},
  rd/.style={draw, rounded corners, align=left, font=\footnotesize,
             minimum height=12mm, text width=52mm, fill=green!5}]
\node[box] (mol) {Molecule\\ $\{\mathbf r_i,\,z_i\}$};
\node[box, right=of mol] (cx) {Cell complex $X$\\ atoms, bonds, rings};
\node[box, right=of cx] (sh) {Equivariant sheaf\\ steerable maps\\ $\mathcal F_{v\trianglelefteq e}(\hat{\mathbf r}_e)$};
\node[box, below=of sh] (lap) {Sheaf Laplacian\\ $L_{\mathcal F}=\delta^{\top}\delta$};
\node[box, left=of lap] (diff) {Sheaf diffusion\\ ($\times T$ layers)};
\node[rd, left=of diff] (read) {Readouts:\\
 $\bullet$ invariant: energy, gap $=$ spectral gap\\
 $\bullet$ covariant: Hamiltonian, dipole\\
 $\bullet$ topological: $\dim\ker L_{\mathcal F}$, $\dim\ker L_1$};
\draw[->] (mol) -- (cx);
\draw[->] (cx) -- (sh);
\draw[->] (sh) -- (lap);
\draw[->] (lap) -- (diff);
\draw[->] (diff) -- (read);
\end{tikzpicture}
\caption{Equivariant Cellular Sheaf Network (ECSN). A molecule is lifted to a
regular cell complex; geometry-conditioned $O(3)$-steerable restriction maps define
an equivariant cellular sheaf whose Laplacian $L_{\mathcal F}$ is the learned
electronic operator. Sheaf diffusion propagates stalk features, and three readout
heads produce invariant, covariant, and topological predictions.}
\label{fig:arch}
\end{figure}

\paragraph{1. Complex construction.} From $\{\rr_i,z_i\}$ build $\X$ as in
Section~\ref{sec:background}; assign each atom a stalk
$\F(v)=\bigoplus_{\ell\le L}m_\ell V_\ell$ matching a chosen orbital budget.

\paragraph{2. Steerable restriction maps.} For each bond $e=\{u,v\}$ at layer $t$,
predict
\[
\restr{v}{e}^{(t)}=\sum_{\ell_f}R^{(t)}_{\ell_f}(\|\rr_e\|,h_v,h_u)\,
\big(C\big)\,Y_{\ell_f}(\hat\rr_e)
\]
via \eqref{eq:steer}, where the radial networks $R^{(t)}_{\ell_f}$ depend on
invariant features $h$. By Def.~\ref{def:eqsheaf} these are equivariant.

\paragraph{3. Operator assembly.} Form $L_\F^{(t)}$ from \eqref{eq:lapblocks}
(optionally also $L_1^{(t)}$). Use the normalized
$\hat L_\F=D^{-1/2}L_\F D^{-1/2}$ with $D=\diag((L_\F)_{vv})$.

\paragraph{4. Sheaf diffusion.} Update cochains with a neural sheaf-diffusion
step \cite{bodnar2022}, made equivariant by gated nonlinearities $\phi$ that act
on $O(3)$ norms only:
\begin{equation}
X^{(t+1)}=X^{(t)}-\phi\!\Big(\hat L_\F^{(t)}\,(I\otimes W_1^{(t)})\,X^{(t)}\,W_2^{(t)}\Big),
\end{equation}
where $W_1$ mixes within stalks (per-$\ell$ scalars to preserve equivariance) and
$W_2$ mixes channels.

\paragraph{5. Readouts.}
\begin{itemize}[leftmargin=1.4em,itemsep=1pt]
\item \emph{Invariant} (energy, gap): pool $O(3)$-invariants of $X^{(T)}$ and the
low-lying spectrum of $L_\F^{(T)}$ (the spectral gap estimates HOMO--LUMO).
\item \emph{Covariant} (dipole, forces, the Hamiltonian itself): output the
predicted blocks $\{\restr{}{}\}$ directly, recovering the PhiSNet/QHNet target as
a structured special case \cite{phisnet,qhnet}.
\item \emph{Topological}: report $\dim\ker L_\F$ (non-bonding count,
Thm.~\ref{thm:coh}) and $\dim\ker L_1$ (cycle/aromatic content).
\end{itemize}

\section{Theoretical Properties}\label{sec:theory}

\begin{theorem}[Expressivity hierarchy]\label{thm:express}
On a fixed complex $\X$:
\begin{enumerate}[leftmargin=1.6em,itemsep=1pt]
\item[(i)] Restricting stalks to scalars ($\ell=0$) and restriction maps to
scalar multipliers, one ECSN sheaf-diffusion step realizes a cellular (CW)
network message-passing update \cite{cwn}; on the $1$-skeleton it realizes a
standard MPNN \cite{gilmer2017}.
\item[(ii)] Restricting $\X$ to its $1$-skeleton and restriction maps to diagonal
steerable blocks recovers an $E(3)$-equivariant MPNN of tensor-field type
\cite{tfn,nequip}.
\item[(iii)] The inclusion is strict: there exist sheaves with
$H^0(\X;\F)=0$ (no nonzero global section), which distinguish complexes that the
trivial-sheaf models in (i) cannot, because the trivial sheaf always contains the
constant section in its kernel \cite{bodnar2022}.
\end{enumerate}
\end{theorem}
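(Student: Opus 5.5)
The plan is to verify each part by explicit specialization of the diffusion update of Section~\ref{sec:method}. Then, for (iii), exhibit one concrete sheaf.

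For (i), I set every stalk to $\R$ ($\ell=0$, $m_0=1$) and every restriction map $\restr{\sigma}{\tau}$ to a scalar $a_{\sigma\tau}$, possibly predicted by an invariant radial network from $h_\sigma,h_\tau$. By \eqref{eq:lapblocks}, $L_\F$ becomes a weighted graph Laplacian. Its off-diagonal entries are $-a_{ue}a_{ve}$, and its diagonal entries are $\sum_e a_{ve}^2$. Hence $(\hat L_\F X)_v$ is a weighted difference between $h_v$ and an aggregate of neighbour features. After the channel mixing $W_2$ and the pointwise $\phi$, this is a message-passing update of the form $h_v'=\mathrm{UPD}(h_v,\sum_{u\sim v}\mathrm{MSG}(h_u,h_v,e))$ in the sense of \cite{gilmer2017}, with residual update. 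For the CW-network claim I replace $L_0$ by the Hodge Laplacians $L_k$ on each dimension. The two summands $(\delta^k)^\top\delta^k$ and $\delta^{k-1}(\delta^{k-1})^\top$ then give exactly the upper-adjacency (coface) and boundary/lower messages of \cite{cwn}, with signed incidences absorbed into the learned scalars.

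For (ii), I keep the $O(3)$-irreducible stalks but restrict to the $1$-skeleton. I take restriction maps block-diagonal in $\ell$, with steerable kernels as in \eqref{eq:steer}. The product $\restr{u}{e}^\top\restr{v}{e}$ is then a sum of Clebsch--Gordan contractions of $Y_{\ell_f}(\hat\rr_e)$ with radial weights. This is precisely a tensor-field convolution filter, so $L_\F X$ reproduces the TFN/NequIP convolution $\sum_u W(\rr_{uv})\otimes h_u$ plus a self-interaction from the diagonal block. Theorem~\ref{thm:equiv} guarantees the equivariance matches. The gated nonlinearity on norms is the standard TFN gate.

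For (iii), I argue in two steps. First, the trivial sheaf (all maps identity) has $\delta^0 x=0$ for constant $x$, so $\mathbf 1\in\ker L_\F$ and $\dim H^0\ge 1$ on every connected complex. Second, I take the scalar sheaf on a triangle with maps chosen so that the product of signs around the cycle is $-1$, for example one edge with $\restr{u}{e}=1$ and $\restr{v}{e}=-1$. Propagating a section around the cycle forces $x_v=-x_v$, so $H^0=0$. To get a distinguishing statement, I will compare the kernel dimension across two complexes, such as odd versus even cycles under this twisted sheaf: the odd cycle gives $0$, while an even cycle with balanced twist gives $1$. Both are regular graphs, so trivial-sheaf scalar MPNN diffusion with constant initial features cannot separate them.

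The main obstacle is making ``strict'' precise. The theorem phrases it via $H^0$, so I will formalize it as the existence of an invariant, $\dim\ker L_\F$, that the restricted model classes cannot realize on this pair of inputs.
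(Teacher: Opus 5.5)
Your (i) and (ii) follow the paper's own argument, which is an explicit parameter restriction of \eqref{eq:steer} and \eqref{eq:lapblocks}. The first half of your (iii) is also right: constants lie in the kernel of the trivial sheaf, and a single sign twist on a triangle forces $x_v=-x_v$. The gap is the pair you use to make ``strict'' precise, and it fails in two ways.

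First, the separation comes from choosing a different sheaf for each input. Under the sheaf you actually defined (one twisted edge), every cycle, odd or even, has holonomy $-1$ and hence $H^0=0$. The even cycle gets $\dim H^0=1$ only because you switch to a ``balanced'' twist there. With that freedom you could equally ``separate'' two copies of the same graph by putting the trivial sheaf on one and the twisted sheaf on the other, so nothing about the model class follows.

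Second, odd and even cycles have different numbers of vertices, so trivial-sheaf models do separate them. On a $k$-regular graph $\hat L_\F=L_\F/k$ annihilates constants, so diffusion from constant features stays uniform, and a sum readout returns $n$ times a fixed value. The $1$-WL test also separates graphs of different order. Regularity alone is therefore not enough: the two inputs must have equal order and equal degree.

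To repair this, fix one rule for all inputs: negate one endpoint's map on every edge. Then $(L_\F)_{uv}=+1$ for every $\{u,v\}\in E$, whichever endpoint was negated, so $L_\F=D+A$ (degree matrix plus adjacency) is the signless Laplacian. The holonomy around an $n$-cycle is $(-1)^n$, which is the rule behind your odd/even intuition. Since $x^\top(D+A)x=\sum_{\{u,v\}\in E}(x_u+x_v)^2$, $\dim H^0(\X;\F)$ equals the number of bipartite components.

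Apply this to $K_{3,3}$ and the triangular prism $C_3\times K_2$. Both are connected and $3$-regular on six vertices. Trivial-sheaf diffusion from constant features therefore gives identical uniform features on both, and the trivial sheaf has $\dim H^0=1$ on both. The twisted sheaf gives $\dim H^0=1$ on $K_{3,3}$ and $0$ on the prism.

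State strictness against diffusion-plus-pooling models, not spectral readouts. On a $k$-regular graph $D+A=2kI-(D-A)$, so the full trivial-sheaf spectrum would also detect bipartiteness.

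A smaller point in (i): $\delta^{k-1}(\delta^{k-1})^\top$ produces lower-adjacency messages, not the boundary messages of \cite{cwn}. A Laplacian step never moves information between dimensions. It therefore realizes a CW update with boundary messages switched off. That suffices for the weak reading of (i), but it is not ``exactly'' the CWN update.
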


\begin{proof}[Proof sketch]
(i) and (ii) are explicit parameter restrictions: setting the steerable basis to
its $\ell_f=0$ component and the stalks to the trivial representation reduces
\eqref{eq:steer} to a scalar and \eqref{eq:lapblocks} to a (weighted) graph or
cell Laplacian, the operator underlying spectral graph convolutions \cite{defferrard2016}, whose diffusion step is the corresponding message passing. (iii)
follows from the spectral theory of sheaf Laplacians \cite{hansenghrist2019}: the
trivial sheaf has the all-ones (constant) section in $\ker L_\F$, so trivial-sheaf
diffusion cannot separate two graphs that share that harmonic structure, whereas a
non-trivial sheaf with trivial global sections assigns them different Laplacian
spectra and kernels \cite{bodnar2022}.
\end{proof}

\begin{proposition}[Anti-oversmoothing]\label{prop:smooth}
For the trivial sheaf, $\lim_{t\to\infty}$ of (linear, normalized) sheaf
diffusion projects onto $\ker L_\F$, which contains the constants, causing
feature collapse (oversmoothing). For an equivariant sheaf whose restriction maps
have trivial agreement space on each cycle ($H^0(\X;\F)=0$), the Dirichlet energy
$\langle X,L_\F X\rangle$ is bounded below by $\lambda_{\min}^{+}(L_\F)\|X\|^2$ on
the orthogonal complement of $\ker L_\F$, so non-constant structure is preserved
across depth.
\end{proposition}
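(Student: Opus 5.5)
The plan is to treat linear normalized sheaf diffusion as the discrete (or continuous) semigroup generated by the symmetric PSD operator $\hat L_\F$, and to read both claims off its spectral decomposition. I will work with the linear update $X^{(t+1)}=(I-\eta\hat L_\F)X^{(t)}$ with step $0<\eta<2/\lambda_{\max}(\hat L_\F)$, or equivalently $\dot X=-\hat L_\F X$. Since $\hat L_\F=D^{-1/2}L_\F D^{-1/2}$ with $D$ positive definite on stalks (assuming every vertex block $(L_\F)_{vv}$ is invertible, which I will add as a mild hypothesis), $\hat L_\F$ is symmetric PSD and $\ker\hat L_\F=D^{1/2}\ker L_\F$. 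These are congruent, so the two kernels have the same dimension.

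For the trivial sheaf, I diagonalize $\hat L_\F=\sum_i\lambda_i q_iq_i^\top$. The iterates are $(I-\eta\hat L_\F)^tX^{(0)}=\sum_i(1-\eta\lambda_i)^t q_iq_i^\top X^{(0)}$. Every mode with $\lambda_i>0$ satisfies $|1-\eta\lambda_i|<1$ and decays geometrically, while the kernel modes are fixed. The limit is therefore the orthogonal projector onto $\ker\hat L_\F$. For the trivial sheaf, $\delta^0$ is the signed incidence matrix, whose kernel contains the constant vector, so the kernel of $\hat L_\F$ contains $D^{1/2}\mathbf 1$. On a connected complex this kernel is exactly that line, and every feature channel collapses to a multiple of $D^{1/2}\mathbf 1$. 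The node features become indistinguishable up to degree weighting, which is oversmoothing.

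For the equivariant sheaf, the Dirichlet bound is the Rayleigh–Ritz inequality. Write $X=X_0+X_\perp$ with $X_0\in\ker L_\F$. Then $\langle X,L_\F X\rangle=\langle X_\perp,L_\F X_\perp\rangle\ge\lambda^+_{\min}(L_\F)\|X_\perp\|^2$, since $L_\F$ restricted to $(\ker L_\F)^\perp$ has spectrum contained in $[\lambda^+_{\min},\lambda_{\max}]$. When $H^0(\X;\F)=0$, Theorem~\ref{thm:coh} together with $\ker L_\F=\ker\delta^0$ gives $\ker L_\F=0$. The bound then holds on all of $C^0$, so $\lambda^+_{\min}=\lambda_{\min}>0$ and no nonzero feature has vanishing Dirichlet energy.

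I will phrase the consequence carefully. Linear diffusion with $\ker L_\F=0$ converges to $0$ rather than to a collapsed constant subspace, and the collapse onto constants is impossible because constants are not sections. The meaningful "preservation" statement is relative. After renormalizing $X^{(t)}/\|X^{(t)}\|$, the iterates align with the lowest eigenspace of $\hat L_\F$. That eigenspace is determined by the restriction maps, and by Theorem~\ref{thm:equiv} it is an $O(3)$-covariant subspace rather than the constants.

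The main obstacle is the phrase "non-constant structure is preserved across depth." Taken literally for linear diffusion with trivial kernel, the features decay to zero, so the proposition cannot be claiming absolute preservation. I would resolve this in one of two ways. The first is to restate the claim as a lower bound on energy per unit norm, $\langle X,L_\F X\rangle/\|X\|^2\ge\lambda_{\min}>0$ along the trajectory. This rules out the measure-of-oversmoothing criterion $E(X^{(t)})/\|X^{(t)}\|^2\to0$ used by \cite{bodnar2022}. The second is to rely on the residual nonlinear layer of Section~\ref{sec:method}, whose channel mixing $W_2$ can counteract the decay. I would adopt the first, since it follows directly from the spectral argument above.

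A secondary issue is the hypothesis "trivial agreement space on each cycle." I would simply replace it by the stated global condition $H^0(\X;\F)=0$, which is what the argument actually uses.
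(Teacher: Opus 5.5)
Your proof is correct and follows the paper's route. The paper defers the first claim to the Dirichlet-energy/oversmoothing theorem of Bodnar et al.\ and gets the bound from the variational characterization of $\lambda_{\min}^{+}(L_\F)$; your eigendecomposition of $(I-\eta\hat L_\F)^t$ and your Rayleigh--Ritz step are those same arguments written out.

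Your caveats are sound and sharper than the paper. Normalized diffusion collapses onto $D^{1/2}\ker L_\F$, not onto $\ker L_\F$. And with $H^0(\X;\F)=0$ linear diffusion tends to $0$, so only the Rayleigh-quotient reading of ``preserved across depth'' is provable. Two small slips do not affect the proof:
\begin{itemize}
\item Renormalized iterates align with the \emph{lowest} eigenspace of $\hat L_\F$ only in continuous time or for $\eta<2/(\lambda_{\min}+\lambda_{\max})$; for $\eta$ near $2/\lambda_{\max}$ the top mode dominates.
\item The normalized ratio criterion is your reinterpretation rather than the cited one, since Bodnar et al.\ track the unnormalized sheaf Dirichlet energy, which does tend to $0$ here.
\end{itemize}
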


\begin{proof}
The two statements are the specialization of \cite[Thm.\ on Dirichlet energy and
oversmoothing]{bodnar2022} to our equivariant restriction maps; the lower bound is
the variational characterization of the smallest nonzero eigenvalue of the PSD
operator $L_\F$.
\end{proof}

\paragraph{Complexity.} With maximum stalk dimension $d$, bond count $|E|$, and
$T$ layers, assembling and applying $L_\F$ costs $O(T|E|d^2)$ time and
$O(|E|d^2)$ memory, matching equivariant MPNNs up to the stalk factor; the
optional spectral readout adds an $O(Nd)$-dimensional sparse eigenproblem.

\section{Experiments}\label{sec:exp}

We validate the framework numerically. Every quantity below is computed (code to
reproduce all numbers and figures accompanies the manuscript); E1--E3 test the
three main results directly, and E4 tests the learning behavior the equivariant
structure is meant to provide.

\subsection{E1: the localized Hamiltonian is a sheaf Laplacian (Prop.~\ref{prop:embed})}
For eleven $\pi$-conjugated molecules we form the H\"uckel Hamiltonian, apply the
PSD shift of Section~\ref{sec:framework}, factor it per bond, and reassemble the
sheaf Laplacian. The scalar-stalk reconstruction is exact:
$\max_{\text{mol}}\|L_{\F}-\tilde H\|_F=0$. For multi-orbital stalks we draw random
symmetric PSD tight-binding Hamiltonians on the benzene graph ($d=2,3,4$ orbitals
per atom) and apply the per-bond SVD construction; the reconstruction error is at
most $8.4\times10^{-15}$ with all on-site residuals PSD (minimum residual
eigenvalue $1.0$). The embedding is exact to machine precision, confirming
Prop.~\ref{prop:embed} and Remark~\ref{rem:psd}.

\subsection{E2: cohomology counts non-bonding orbitals (Thm.~\ref{thm:coh}, Cor.~\ref{cor:alternant})}
Table~\ref{tab:e2} reports $\dim H^0(\X;\F)=\dim\ker L_{\F}$ at the non-bonding
reference. The values reproduce the known counts of non-bonding $\pi$ orbitals in
every case: zero for the closed-shell aromatics (benzene, naphthalene,
cyclopentadienyl), one for the odd alternant radicals (allyl, pentadienyl), and two
for the open-shell diradicals (cyclobutadiene, cyclooctatetraene,
trimethylenemethane). For every bipartite system the bound
$\dim H^0\ge \big||V_A|-|V_B|\big|$ of Cor.~\ref{cor:alternant} holds, and is tight
for the odd alternants and for trimethylenemethane (Fig.~\ref{fig:e2}).

\begin{table}[t]
\centering\small
\caption{E2: computed $\dim H^0(\X;\F)$ versus the topological lower bound for
eleven conjugated molecules. ``bip.'' marks bipartite (alternant) systems.}
\label{tab:e2}
\begin{tabular}{lcccc}
\toprule
molecule & $|V|$ & bip. & $\big||V_A|-|V_B|\big|$ & $\dim H^0$ \\
\midrule
ethylene            & 2  & yes & 0  & 0\\
allyl               & 3  & yes & 1  & 1\\
butadiene           & 4  & yes & 0  & 0\\
pentadienyl         & 5  & yes & 1  & 1\\
hexatriene          & 6  & yes & 0  & 0\\
benzene             & 6  & yes & 0  & 0\\
cyclobutadiene      & 4  & yes & 0  & 2\\
cyclooctatetraene   & 8  & yes & 0  & 2\\
cyclopentadienyl    & 5  & no  & -- & 0\\
trimethylenemethane & 4  & yes & 2  & 2\\
naphthalene         & 10 & yes & 0  & 0\\
\bottomrule
\end{tabular}
\end{table}

\begin{figure}[t]
\centering
\includegraphics[width=0.80\linewidth]{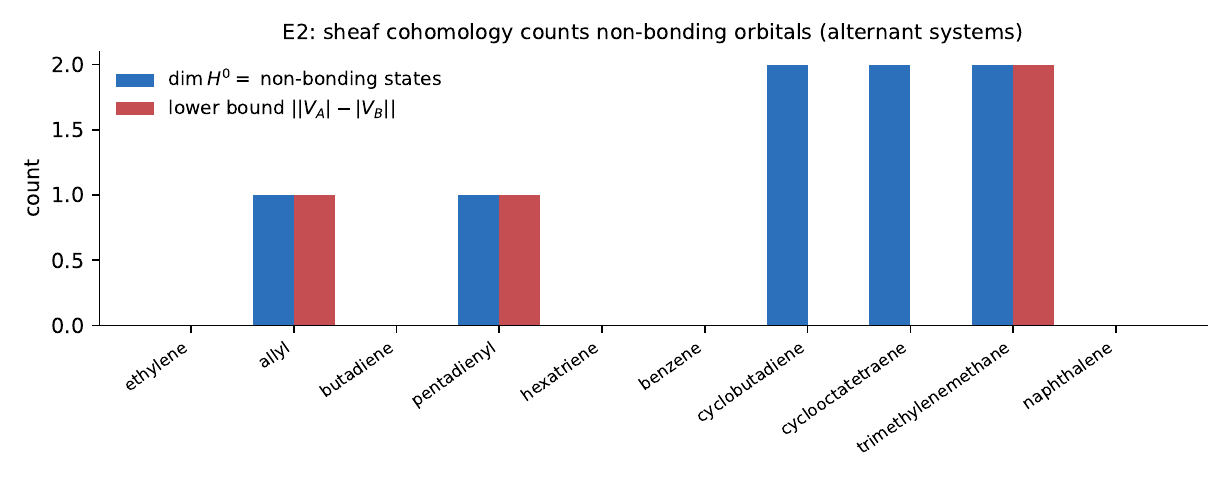}
\caption{E2: for the alternant systems, the computed cohomology dimension
$\dim H^0$ (non-bonding orbital count) and the topological lower bound
$\big||V_A|-|V_B|\big|$ of Cor.~\ref{cor:alternant}.}
\label{fig:e2}
\end{figure}

\subsection{E3: the sheaf Laplacian is $E(3)$-equivariant (Thm.~\ref{thm:equiv})}
On a ringed six-atom molecule with $\ell=1$ stalks and parity-even steerable
restriction maps we compare $L_{\F[g\mathbf r]}$ with $P(g)L_{\F[\mathbf r]}P(g)^\top$
over $200$ random rotations and one reflection. The mean relative error is
$8.1\times10^{-16}$ for rotations and exactly $0$ for the reflection: equivariance
holds to machine precision across all of $O(3)$. Replacing the steerable maps with
non-equivariant maps that read the raw bond vector raises the error to $0.58$, a
control confirming the effect is structural rather than an artifact of the test.

\subsection{E4: equivariance gives lower error and rotation generalization (Thm.~\ref{thm:equiv})}
We test the learning benefit of the equivariant structure on a directional
electronic target: the HOMO--LUMO gap of a $p$-only Slater--Koster Hamiltonian on
random small molecules, which depends on bond angles. With no rotation
augmentation, on PCA-canonicalized geometries, we train (i) an MLP on the
rotation-invariant spectral moments of the equivariant sheaf Laplacian and (ii) an
equal-capacity MLP on raw atomic coordinates, then test both on held-out molecules
in the canonical frame and under a random rotation (Table~\ref{tab:e4},
Fig.~\ref{fig:e4}). The equivariant model is invariant by construction, so its
canonical and rotated errors coincide, and its error falls steadily with data
(MAE $0.32\to0.11$ as $N$ grows from $20$ to $160$). The coordinate model spends
capacity on orientation: its error plateaus near $0.22$ and rises to
$0.27$--$0.30$ on rotated molecules it never saw. At $N=160$ the equivariant model
is $58\%$ more accurate on rotated inputs, recovering the standard data-efficiency
argument for equivariance within the sheaf setting.

\begin{table}[t]
\centering\small
\caption{E4: test MAE (gap, $|t|$ units) versus training-set size, mean over three
seeds. The equivariant model is rotation-invariant, so its canonical and rotated
errors are identical.}
\label{tab:e4}
\begin{tabular}{lcccc}
\toprule
$N_{\text{train}}$ & 20 & 40 & 80 & 160\\
\midrule
equivariant sheaf (canonical $=$ rotated) & 0.319 & 0.203 & 0.137 & 0.112\\
coordinate MLP (canonical)                & 0.234 & 0.219 & 0.232 & 0.223\\
coordinate MLP (rotated)                  & 0.240 & 0.269 & 0.305 & 0.269\\
\bottomrule
\end{tabular}
\end{table}

\begin{figure}[t]
\centering
\includegraphics[width=0.66\linewidth]{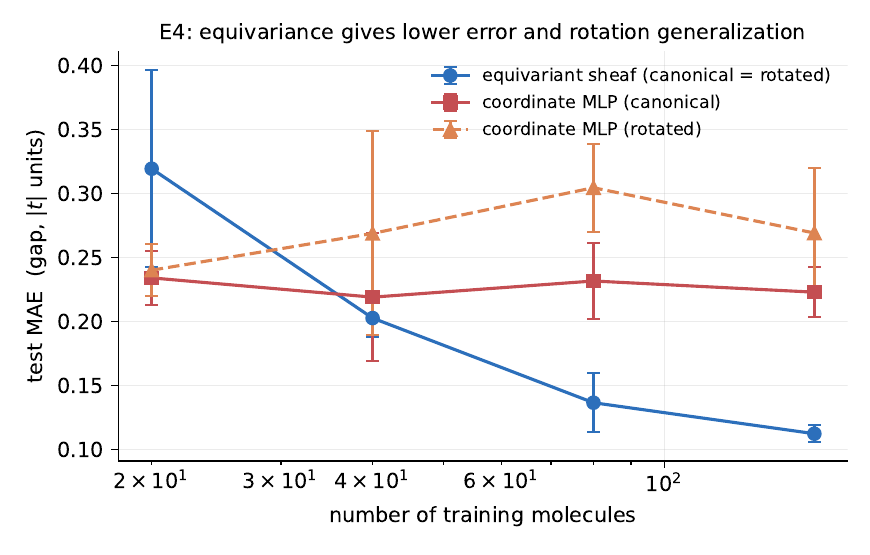}
\caption{E4: learning curves on the directional gap target. The equivariant sheaf
model is more accurate and generalizes across orientations with no augmentation;
the coordinate model plateaus and degrades on rotated test molecules.}
\label{fig:e4}
\end{figure}

\subsection{Scope of the present evaluation}
E1--E3 are exact validations of the theory, and E4 is a controlled study on
synthetic tight-binding targets chosen so that the directional content is
unambiguous. We have not run large-scale benchmarks (QM9 \cite{qm9},
MD17 \cite{md17}) or self-consistent Hamiltonian datasets \cite{phisnet,qhnet};
two predictions that such benchmarks would test remain open: (P1) ring $2$-cells
reduce error on delocalization-sensitive targets for conjugated systems through
$H^1$, and (P2) the PSD-plus-locality sheaf parameterization improves data
efficiency for full Kohn--Sham/Fock Hamiltonian regression relative to
unconstrained equivariant prediction. We state these as predictions, not results.

\section{Limitations}\label{sec:limits}

The PSD shift requires a reference energy $E_{\mathrm{ref}}$; a poor choice moves
the chemically meaningful kernel (Remark~\ref{rem:psd}). Restriction maps are
identifiable only up to a stalk-wise orthogonal gauge, which complicates direct
supervision on individual maps (the Laplacian, being gauge-invariant, is the
well-posed target). The cycle basis for $2$-cells is non-unique; results should be
reported under a fixed canonical choice (e.g.\ smallest set of smallest rings).
Sheaf assembly multiplies cost by a stalk factor $d^2$ relative to scalar GNNs.
Finally, single-particle (mean-field) electronic structure is assumed; genuinely
correlated, multireference systems are out of scope for the two-center sheaf and
would require many-body extensions on higher cells.

\section{Conclusion}\label{sec:conclusion}

We identified the localized-orbital electronic Hamiltonian with the Laplacian of
an equivariant cellular sheaf on a molecular cell complex, and developed the
consequences: an $E(3)$- and permutation-equivariant operator with Slater--Koster
tight binding as a special case, topological invariants (sheaf cohomology) that
count non-bonding orbitals and detect cycle structure, a strict expressivity
hierarchy over equivariant MPNNs and CW networks, and an anti-oversmoothing
guarantee inherited from non-trivial sheaf diffusion. The framework unifies
topological deep learning with equivariant electronic-structure learning and
suggests that sheaf cohomology is a natural language for the topology of chemical
bonding. The novelty is the formalization and its invariants, not equivariant
Hamiltonian prediction per se, which is due to prior work.

\end{document}